\newtheorem{theorem}{Theorem}
\newtheorem{lemma}[theorem]{Lemma}
\def\BibTeX{{\rm B\kern-.05em{\sc i\kern-.025em b}\kern-.08em
    T\kern-.1667em\lower.7ex\hbox{E}\kern-.125emX}}
\begin{document}

\title{ST-DPGAN: A Privacy-preserving Framework for Spatiotemporal Data Generation
}

\author{Wei~Shao,
        Rongyi~Zhu,
        Cai~Yang,
        Chandra~Thapa,
        Muhammad~Ejaz~Ahmed,
        Seyit~Camtepe,
        Rui~Zhang,
        Du Yong~Kim,
        Hamid~Menouar,
        and~Flora~D.~Salim
\thanks{Wei Shao, Chandra Thapa,  Muhammad Ejaz Ahmed and Seyit~Camtepe are with the Data61, CSIRO, Australia. email: wei.shao@data61.csiro.au.}
\thanks{Rongyi~Zhu is with the University of Rochester. email: rongyi.zhu@rochester.edu}
\thanks{Cai~Yang is with Australian National University, Australia.}
\thanks{Rui~Zhang is with Tsinghua University, China.}
\thanks{Du Yong~Kim is with RMIT University, Australia.}
\thanks{Hamid~Menouar is working at Qatar Mobility Innovations Center, Qatar}
\thanks{Flora~D.~Salim is with University of New South Wales, Australia.}
\thanks{Manuscript received April 19, 2005; revised August 26, 2015.}}

\markboth{IEEE INTERNET OF THINGS JOURNAL ,~Vol.~14, No.~8, August~2015} %
{Shao \MakeLowercase{ \textit{et al.}}: ST-DPGAN: A Privacy-preserving Framework for Spatiotemporal Data Generation }

\maketitle

\begin{abstract}

Spatiotemporal data is prevalent in a wide range of edge devices, such as those used in personal communication and financial transactions. Recent advancements have sparked a growing interest in integrating spatiotemporal analysis with large-scale language models. However, spatiotemporal data often contains sensitive information, making it unsuitable for open third-party access. To address this challenge, we propose a Graph-GAN-based model for generating privacy-protected spatiotemporal data. Our approach incorporates spatial and temporal attention blocks in the discriminator and a spatiotemporal deconvolution structure in the generator. These enhancements enable efficient training under Gaussian noise to achieve differential privacy. Extensive experiments conducted on three real-world spatiotemporal datasets validate the efficacy of our model. Our method provides a privacy guarantee while maintaining the data utility. The prediction model trained on our generated data maintains a competitive performance compared to the model trained on the original data.


\end{abstract}

\begin{IEEEkeywords}
differential privacy, spatiotemporal data, generative adversarial network
\end{IEEEkeywords}

\section{Introduction}

Spatiotemporal data permeates various aspects of our lives, ranging from everyday experiences like transportation \cite{ermagun2018spatiotemporal}, website browsing \cite{zhang2009spatio}, and social communication \cite{chae2012spatiotemporal} to larger-scale domains such as criminology, energy transmission \cite{yi2015spatiotemporal}, and currency flow \cite{zhou2022spatiotemporal}. Consequently, spatiotemporal data inherently encompasses sensitive information linked to both time and space. For instance, parking data in the central business districts of Melbourne has its geographic insights~\cite{ijcai2020-463,shao2021fadacs}, which consists of the human mobility pattern, the traffic flow pattern, and the government traffic policies \cite{xue2021mobtcast,xue2021termcast}. Similarly, spatiotemporal data from social media often contains abundant temporal personal information, including browsing history, traveling details, and social network changes \cite{beigi2018privacy}. These factors raise significant privacy concerns regarding the public availability and dissemination of spatiotemporal data.

As mentioned above, semantic information in spatiotemporal data poses a new challenge for fine-grained exploitation. Many regions have proposed data protection regulations due to privacy concerns. The European Union has prompted the General Data Protection Regulation (GDPR), which requires appropriate measures, such as pseudonymisation, to protect data privacy \cite{EUDataBill}. 
This regulation makes it important to find a de-identification method when we want to publish more fine-grained spatiotemporal datasets. This technique is also important in other parts of the world.
Chinese regulators also encourage conducting anonymous and de-identification processing to the fullest extent possible on data processing activities \cite{DataLaw2021}. In 2021, the United States set up an independent data protection agency to regulate specified high-risk data practices, such as using automated decision systems \cite{USDataBill}. Thus, an answer to how to protect data privacy is important to the further study of spatiotemporal analysis.

\begin{figure}
  \centering
    \includegraphics[width=0.4\textwidth]{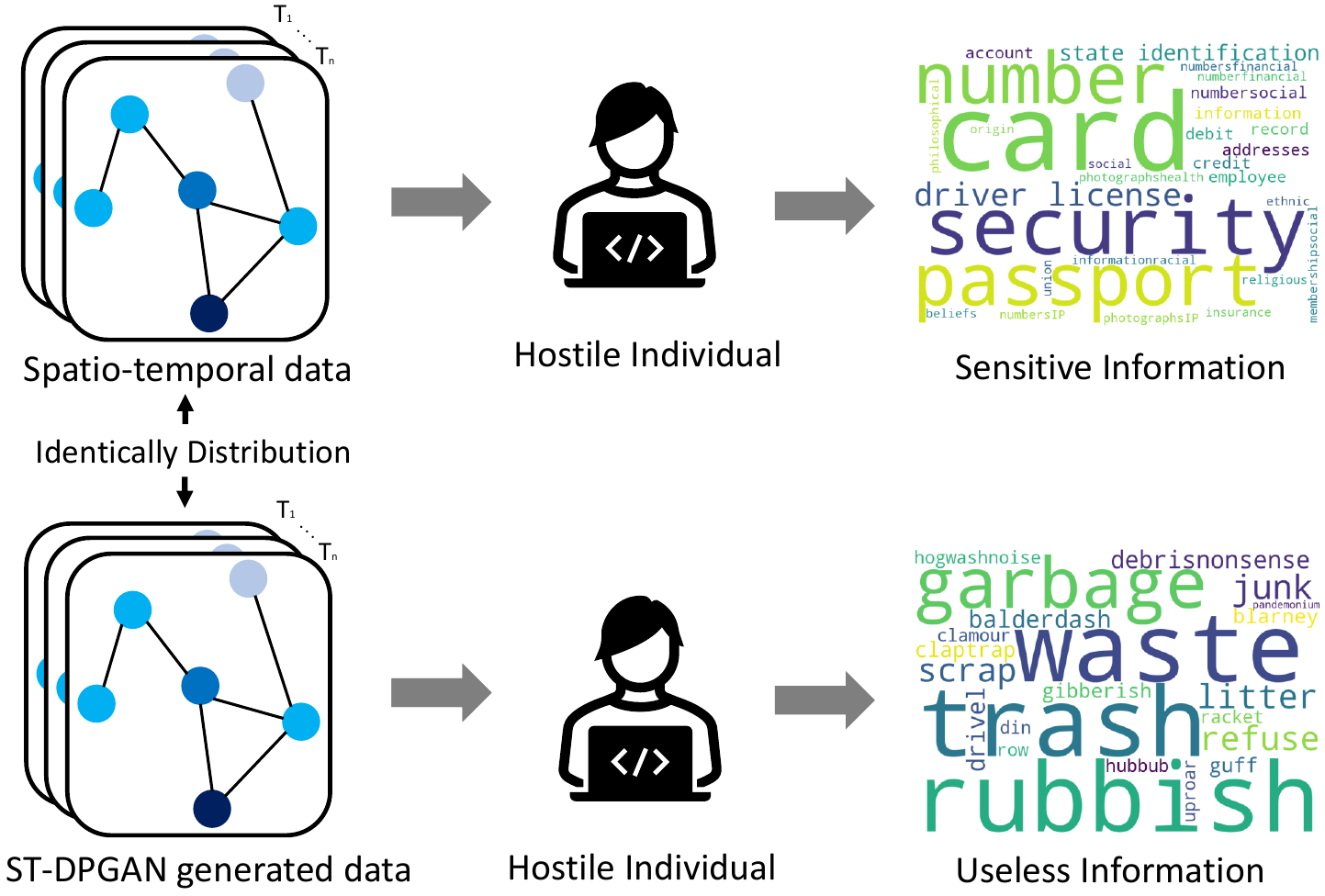}
  \caption{A hostile individual can extract sensitive information from original spatiotemporal data. Our proposed method enables the generation of privacy-protected data while maintaining good data quality. Our method serves as a fundamental study for applying spatiotemporal analysis techniques on a large scale.}
  \label{fig:teaser}
\end{figure}


To enable large-scale spatiotemporal data analysis, we need to provide a privacy guarantee. There has been a wide study on how to bring differential private mechanisms into various data generation schemes, such as GAN\cite{xie2018differentially, jordon2018pate}, VAE\cite{chen2018differentially, weggenmann2022dp} and U-Net\cite{dockhorn2022differentially}. In this work, we focus on incorporating GAN into differential private spatiotemporal data generation. As shown in recent studies \cite{gao2022generative}, GAN has more profound potential in spatiotemporal data generation than other methods.

There are challenges in generating spatiotemporal data using GANs with differential privacy.
\textbf{(1)} Spatiotemporal data are heterogeneous. The generated spatial information and temporal information are hard to align into spatiotemporal data as it is demonstrated that spatial and temporal features belong to different Euclidean spaces~\cite{shao2016clustering}. \textbf{(2)} Due to the space-time duality property of spatiotemporal data, generating spatiotemporal data samples with a single variable drawn from a certain distribution is difficult, which is different from homogeneous data such as pixel-based images \cite{salimans2016improved}. \textbf{(3)} Simply adding noises to each spatial location or temporal point cannot guarantee a privacy-preserving dataset for training usage because it does not consider spatial correlation and temporal correlation, and the overall data distribution cannot be maintained.

To overcome the abovementioned challenges, we propose a novel differential private generative adversarial network for generating spatiotemporal data, emphasizing privacy preservation. In our framework, we introduce two key strategies for effectively extracting the distribution of spatiotemporal data and measuring privacy loss during the training process. First, we adopt the Transposed One-Dimensional convolution layer (transConv1d) to generate fake data samples from Gaussian noises, which can transform single-channel Gaussian noises into spatiotemporal data. Second, we simplify existing spatiotemporal graph convolutional networks \cite{yu2017spatio} and introduce temporal attention block and spatial attention block. These blocks are designed to model spatial and temporal dependencies, enabling simultaneous extraction of spatiotemporal features from graph-based time series data and facilitating easier convergence during the training process. The contributions of this paper are summarized as follows:

\begin{itemize}
    \item Introduce an end-to-end framework based on Generative Adversarial Networks (GANs) to generate graph-based spatiotemporal data, incorporating a differential privacy mechanism.
    \item Construct a spatiotemporal convolution module capable of converting 1-D Gaussian noise data into 2-D data, enabling the generation of spatiotemporal data from a random variable.
    \item Propose to utilize two layers of the spatiotemporal attention block in response to the differential private training on the highly heterogeneous spatiotemporal data.
    \item Evaluate our proposed method on three extensive real-world spatiotemporal datasets and finds our methods maintain comparable data utilities with baseline approaches while providing a privacy guarantee. 
\end{itemize}

\section{Related Works}
\noindent \textbf{Differential Private SGD} 
Differential privacy has been introduced into machine learning to secure the training data. The privacy of these training data could be compromised due to the machine learning model's strong representation ability. The differential privacy \cite{dwork2006calibrating} has been proposed to measure the privacy loss of a certain data processing mechanism. This theory can be applied in the deep learning area to measure the maximum differential privacy loss $\sigma$ of a certain deep learning process. The Differential Preserving SGD \cite{abadi2016deep} is the first training mechanism that uses Differential Preserving in the training process of deep learning by clipping the training weights and adding noise to the clipped training weights. After that, \cite{abadi2016deep} tried further to improve the computational efficiency of this training mechanism and published the TensorFlow implementation of the improved Differential Preserving SGD mechanism. However, these works focus on the general training procedure while ignoring the differences in various kinds of data.

\textbf{Private Aggregation of Teacher Ensembles (PATE)}  Besides the Differential preserving SGD, there is another way to protect the privacy of the deep learning model. The Private Aggregation of Teacher Ensembles (PATE)~\cite{papernot2016semi} uses several teacher models, which are trained on different partitions of the privacy data, and a student model, which is trained on unlabelled public data. The label of the public data is obtained by querying the trained teacher models. The privacy-preserving and utility of the whole model can be further enhanced by making the teachers reach a consensus on query voting. 

However, in many real applications \cite{gao2022generative}, such as currency flow and social analysis, public data could be difficult to obtain. In our application, our model generates spatiotemporal data where the public spatial-temporal dataset is difficult to obtain. So, we adopted the DP-SGD structure and applied it in our discriminator to protect the privacy of the generated spatial-temporal data. 


\section{Preliminary}
In this section, we briefly discuss how privacy in our proposed method is protected in deep learning.

Differential privacy is a formal mathematical framework that enables the release of aggregate statistical information about a dataset while provably bounding the privacy risk to individual data subjects\cite{dwork2006calibrating}.
In a deep learning scenario, differential privacy is proposed to import randomness into gradient updates for individual privacy. Furthermore, it
%
prevents the membership inference attack. Attackers can infer the input data by observing the output change from a model or a function by changing a record in the input. Differential privacy guarantees that altering one sample in the dataset will not significantly change the distribution of one function's output. Differential privacy under the Gaussian mechanism uses Kullback Leibler (KL) divergence to measure how privacy is protected.

\begin{equation}
    \text{Pr}[\mathcal{M}(x) \in S ] \leq e^\epsilon ~ \text{Pr}[\mathcal{M}(x') \in S ] + \delta,
\label{dpeq}
\end{equation}
where the $\mathcal{M}(\cdot)$ is a differential privacy mechanism, $\epsilon$ is the privacy cost, and the $\delta$ is a relaxation for privacy constraint. 

In this paper, we utilize the DPSGD developed by Google \cite{abadi2016deep}. The basic outline of DPSGD algorithms is as follows:
\begin{itemize}
    \item  Take a random sample $\mathrm{L}$ with sampling probability $\mathrm{L}/N$, where $N$ is the size of the input data.
    \item  For each $i \in \mathrm{L}$, we calculate the gradient $\mathbf{g}\left(x_{i}\right) \leftarrow \nabla_{\theta}\mathcal{L}\left(\theta, x_{i}\right)$, where $\mathcal{L}\left(\theta, x_{i}\right)$ denotes the loss function and $x_i$ is a training sample.
    \item Then we clip the gradient as $\overline{\mathbf{g}}\left(x_{i}\right) \leftarrow \mathbf{g}\left(x_{i}\right) / \max \left(1, \frac{\left\|\mathbf{g}\left(x_{i}\right)\right\|_{2}}{C}\right)$, where $C$ is a clipping bound.
    \item The Gaussian noise $\left.\mathcal{N}\left(0, \sigma^{2} C^{2} \mathbf{I}\right)\right)$ can then be added to the clipping gradient.
    \item Update the model and calculate the privacy loss until the privacy budget is exhausted.
\end{itemize}

DPSGD uses moment accountants to calculate the privacy costs over training iterations. The DPSGD algorithm is $(\epsilon,\delta)$-differential private, in which the standard deviation $\sigma$ of Gaussian Noise is larger than $\Omega(q\sqrt{T\log(1/\delta)}/\epsilon)$, where $q$ is the sampling probability, $\Omega$ is the lower bound, and $T$ is the number of steps.


\section{Methodology}
\begin{figure*}
  \centering
    \includegraphics[width=0.95\textwidth]{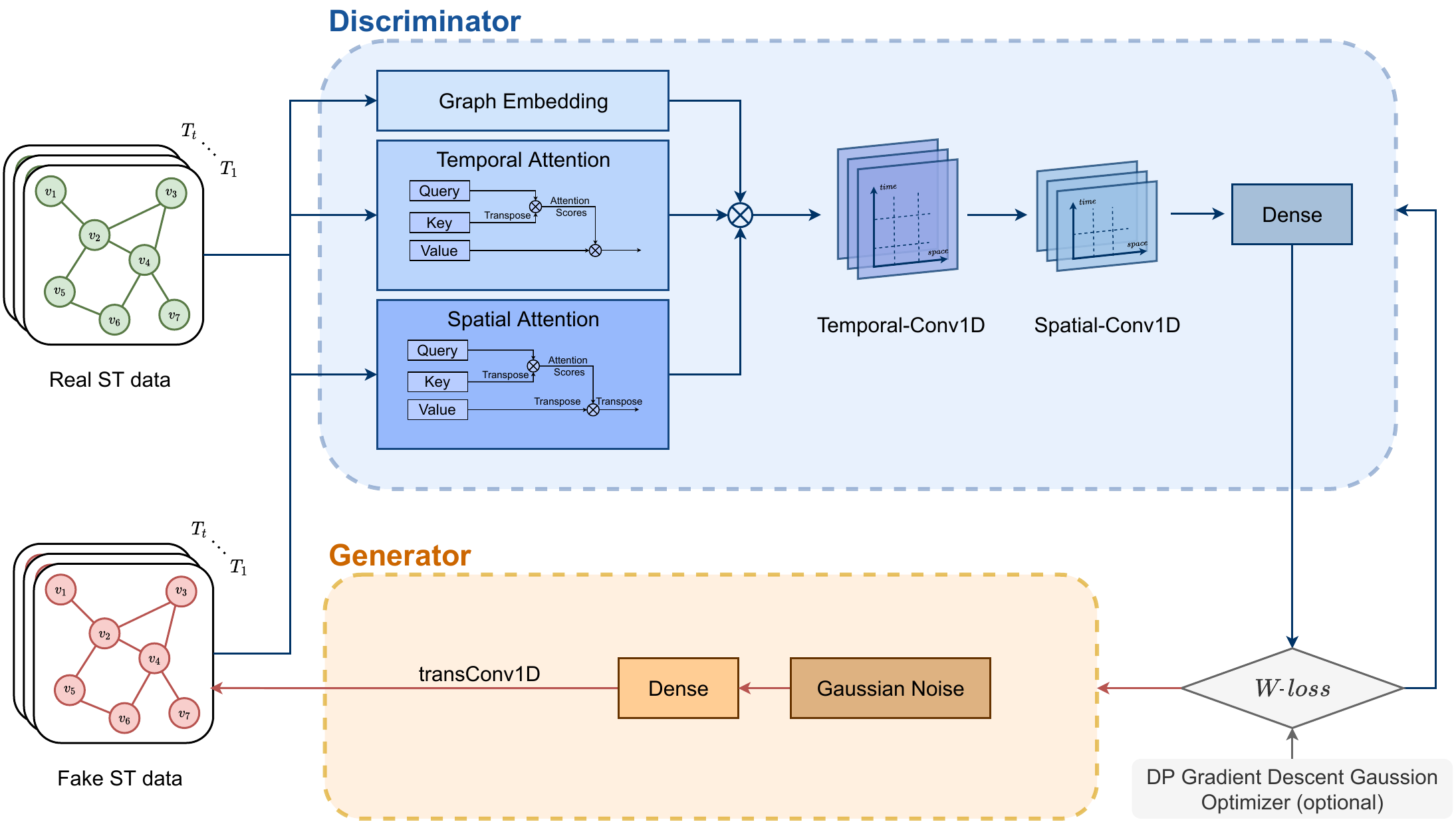}
  \caption{Architecture of ST-DPGAN. }
  \label{fig:ST-DPGAN}
\end{figure*}



To devise an effective method for generating spatiotemporal data while obfuscating sensitive information, we propose the ST-DPGAN (spatiotemporal Differential-Privacy Generative Adversarial Network). This approach is inspired by the WGAN framework proposed by Arjovsky et al.~\cite{arjovsky2017wasserstein}. Our approach incorporates a traditional GAN structure consisting of two key components: a generator and a discriminator. However, unlike the original WGAN, which primarily focuses on image generation, our primary objective is to generate privacy-preserving spatiotemporal data. This goal necessitates overcoming the heterogeneous and space-time duality of spatiotemporal data. In this section, we elaborate on the strategies we have developed to tackle these challenges. 

In essence, our proposed method, ST-DPGAN, is an end-to-end framework. The model accepts real spatiotemporal data as input, represented as $\mathcal{G}(t) = (\mathbf{V}, \mathbf{E}, \mathbf{X}(t))$, where $\mathbf{V}$ denotes the nodes in the graph, $\mathbf{E}$ represents the edges connecting the nodes, and $\mathbf{X}(t)$ signifies the attributes associated with each node, incorporating temporal information. The output of our model is privacy-protected spatiotemporal data that maintains a high data quality.

Similar to most GAN architectures, our proposed network, ST-DPGAN, comprises two primary components: a generator and a discriminator. In our framework, we have introduced two novel features to enhance the performance of these components. Firstly, for the generator, we have incorporated a transConv1d module. This module is designed to transform 1-D Gaussian noise data into 2-D spatiotemporal data, thereby improving the synthesis results. Secondly, for the discriminator, we have replaced the conventional three-block convolutional structure with a two-block attention structure. This modification is intended to align spatial and temporal information within the data more effectively. \footnote{We set $\epsilon = \infty$ when Differential Privacy (DP) is not utilized, . Similarly, when self-attention is not employed, we set $\alpha = \beta = 0$ throughout the training process.} The main pipeline of our approach can be summarized with the following pseudo-code notation:

\begin{algorithm}
\caption{Spatiotemporal Differential-Privacy Generative Adversarial Network}
\KwIn{Spatiotemporal Dataset $D$ with $N$ samples, Laplacian matrix $L$, sampling probability $q$, Differential private optimizer $O_{(\epsilon,\delta)}$, Learning rate $l$, noise level $\sigma$, Overall iteration $T$, coefficients $\alpha$, $\beta$ }
Initialize generator parameters $\theta$, which contains $W_gz$, $K_g$, and $b_g$, discriminator parameters $w$, which contains $K_d$ and $b_d$. \\
\textbf{Generator setup}: 
\begin{equation}
    \begin{aligned}
        &\tilde{z} = W_gz + b_g \\
        &G_{\theta}(z) = \tilde{z}\ast K_g
    \end{aligned} \notag
\end{equation}
\textbf{Discriminator setup}:
\begin{equation}
    \begin{aligned}
        &\tilde{x} =  GraphEmb(x, L) + \alpha \cdot SpatialAttn( x ) \\
        & \qquad  +  \beta \cdot TemporalAttn ( x ) \\
        &\bar{x} = \tilde{x}\ast K_d \\
        &F_w(x) = W_d\bar{x} + b_d 
    \end{aligned} \notag
\end{equation}
\For{i = 1,..., T}{
    \For {t = 1, ..., $k_{\text{critic}}$}{
        Take a random sample $D_q$ from $D$ by probability $q$ \\
        Sampling $m$ example $\{z^{(i)}\}_{k=1}^{m} $ from noise prior $ p(z)$, where $m = qN$.  \\
        $g_w \leftarrow \nabla_{w} [ \frac{1}{m} \sum_{k=1}^{m} F_{w}(x^{(i)}) - \frac{1}{m} \sum_{k=1}^{m} F_{w}(G_{\theta}(z^{(i)}) ) ] $  \\
        $w \leftarrow w + l \cdot O_{(\epsilon, \delta)}(g_w)$
    }
    Sampling m example $\{z^{(i)}\}_{k=1}^{m} $ from noise prior $ p(z)$. \\
    $g_{\theta} \leftarrow \nabla_{w} - \frac{1}{m} \sum_{k=1}^{m} F_{w}(G_{\theta}(z^{(i)}) ) $ \\
    $\theta \leftarrow \theta - l \cdot O_{(\epsilon, \delta)}(g_{\theta})$   
}

\label{algo:stwgan}
\end{algorithm}

\subsection{Generator}
To address the challenge of synthesizing spatiotemporal data from 1-D Gaussian noises, we introduce a novel module in our GAN's generator, termed \textit{transConv1d}. In a traditional GAN framework, the generator's role is to learn an approximation of the real data distribution. This is achieved by mapping random noise to data samples, typically sampled from a Gaussian distribution. However, spatiotemporal data presents a unique complexity due to the inherent spatial and temporal correlations. These correlations pose a challenge in reconstructing the data distribution, particularly while maintaining a balance between privacy preservation and data quality.

To overcome this challenge, we designed the transConv1d module to recover spatial and temporal information from a learned distribution. Specifically, at each temporal data location, the data is multiplied by a kernel initialized with a Gaussian distribution to generate new sequential data. All the generated sequential data is then aggregated to form the final output of the transConv1d module. In our model, the filter parameter of the transConv1d module is set to the size of the spatial locations (N), and the kernel size is set to the length of the time series (T). Consequently, the initial random data with dimensions of $1 \times N$ can be processed into dimensions of $T \times N$. This enables the model to capture the temporal variations in the generated data effectively. Our proposed transConv1d module can help the model recover the aligned spatiotemporal information from noise, which we will validate in the next section. 




\subsubsection{transConv1d}
In this section, we provide our mathematical insight into transConv1d to validate the property of the proposed module. 
\begin{theorem}
\label{thm1}
Let the initial random data be $X$ of size $1 \times N$, where each entry is $x_{j}$, and let the kernel be $K$ of size $s\times s$, where each entry is $k_{ij}$. Assume $x_{j}\stackrel{i.i.d}{\sim} \mathcal{N}(\mu_1,\sigma_1^2) $,  $k_{ij}\stackrel{i.i.d}{\sim} \mathcal{N}(\mu_2,\sigma_2^2) $, and $x_{j}$ is mutually independent to $k_{ij}$. Then the output matrix is $Y$ of size $T \times N$, where each entry of its vectorization form $vec(Y)_{i}$ satisfies

\resizebox{1.1\linewidth}{!}{
  \begin{minipage}{\linewidth}
    \begin{align*}
    vec(Y)_{i}&=\frac{\sigma_1^2+\sigma_2^2}{4}(Q_1-Q_2)+G+m\mu_1\mu_2,
\end{align*}
\vspace{.01cm}
  \end{minipage}
}
where $Q_1,Q_2\sim \chi_m^2$, $G\sim \mathcal{N}(0,\frac{m^2(\sigma_1^2+\sigma_2^2)(\mu_1^2+\mu_2^2)}{2})$, and $m$ is the number of nonzero entries in the $i^th$ row of the transposed convolution matrix.
\end{theorem}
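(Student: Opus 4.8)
The plan is to reduce the transposed convolution to an explicit sum of independent Gaussian products and then expose the chi-squared structure through a polarization identity. First I would write the transposed convolution as a matrix–vector product $vec(Y) = M\,p$, where $M$ is the transposed convolution matrix and $p$ collects the contributing products of the form $x_{j}k_{ij}$. Each output coordinate is then a linear combination of such products, and by the stated sparsity the $i^{\text{th}}$ row of $M$ carries exactly $m$ nonzero entries, so after reindexing I can write
\begin{equation}
    vec(Y)_{i} = \sum_{l=1}^{m} x_{(l)}\,k_{(l)}, \notag
\end{equation}
a sum of $m$ mutually independent terms, each a product of an independent $\mathcal{N}(\mu_1,\sigma_1^2)$ variable and an independent $\mathcal{N}(\mu_2,\sigma_2^2)$ variable. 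Establishing that this reindexing is legitimate — i.e., that the nonzero pattern of the transposed convolution really yields $m$ independent products with no shared factors forced to repeat — is the first thing to pin down.

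Next I would apply the polarization identity $x_{(l)}k_{(l)} = \tfrac14\big[(x_{(l)}+k_{(l)})^2 - (x_{(l)}-k_{(l)})^2\big]$ to each summand. Setting $U_l = x_{(l)}+k_{(l)}$ and $V_l = x_{(l)}-k_{(l)}$, both are Gaussian with common variance $\sigma_1^2+\sigma_2^2$ and means $\mu_1\pm\mu_2$. Expanding each square about its own mean, $U_l^2 = \mathbb{E}[U_l]^2 + 2\mathbb{E}[U_l]\big(U_l-\mathbb{E}[U_l]\big) + \big(U_l-\mathbb{E}[U_l]\big)^2$ and likewise for $V_l$, I would group the three resulting contributions. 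The constant pieces sum to $\tfrac14 m\big[(\mu_1+\mu_2)^2-(\mu_1-\mu_2)^2\big]=m\mu_1\mu_2$; the centered-square pieces give $\tfrac14(\sigma_1^2+\sigma_2^2)\big(\sum_l \tilde{U}_l^2 - \sum_l \tilde{V}_l^2\big)$ with $\tilde{U}_l,\tilde{V}_l$ standardized, so that $Q_1=\sum_l \tilde{U}_l^2$ and $Q_2=\sum_l \tilde{V}_l^2$ are each $\chi^2_m$; and the linear cross pieces assemble into the centered Gaussian $G$. This reproduces exactly the three-term form of the claim, including the prefactor $\tfrac{\sigma_1^2+\sigma_2^2}{4}$ on $Q_1-Q_2$.

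Finally I would compute the law of $G$ by collecting the linear contributions, namely $\tfrac{\sqrt{\sigma_1^2+\sigma_2^2}}{2}\big[(\mu_1+\mu_2)\sum_l\tilde{U}_l - (\mu_1-\mu_2)\sum_l\tilde{V}_l\big]$, which is a centered Gaussian; using that each standardized sum has variance $m$ and expanding $(\mu_1+\mu_2)^2+(\mu_1-\mu_2)^2 = 2(\mu_1^2+\mu_2^2)$ yields a variance proportional to $(\sigma_1^2+\sigma_2^2)(\mu_1^2+\mu_2^2)$, matching the form stated for $G$.

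The hard part, I expect, is the bookkeeping around independence and centrality rather than any single calculation. Because $U_l$ and $V_l$ share the underlying factors $x_{(l)},k_{(l)}$, they are correlated with $\mathrm{Cov}(U_l,V_l)=\sigma_1^2-\sigma_2^2$, so $Q_1$ and $Q_2$ are not exactly independent and the linear parts feeding $G$ are likewise correlated. The clean statement effectively treats these as decoupled, which is exact in the symmetric-variance regime $\sigma_1=\sigma_2$ and otherwise discards a cross term proportional to $(\mu_1^2-\mu_2^2)(\sigma_1^2-\sigma_2^2)$. I would therefore either present the identity under that regime or carry the correlation coefficient $\tfrac{\sigma_1^2-\sigma_2^2}{\sigma_1^2+\sigma_2^2}$ through explicitly and flag the resulting simplification; justifying this decoupling, not the mechanical polarization step, is where the genuine care lies.
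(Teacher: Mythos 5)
Your proposal follows essentially the same route as the paper's own proof: write $vec(Y)_i$ as a sum of $m$ independent products $x_{(l)}k_{(l)}$ ranging over the nonzero entries of the $i^{\text{th}}$ row of the transposed convolution matrix, apply the polarization identity $xk=\tfrac14\big[(x+k)^2-(x-k)^2\big]$, standardize, and collect the constant, squared, and linear pieces into $m\mu_1\mu_2$, $\tfrac{\sigma_1^2+\sigma_2^2}{4}(Q_1-Q_2)$, and $G$ respectively. Where you go beyond the paper is in the correlation bookkeeping you flag at the end, and that is a genuine correction rather than excess caution: the paper's proof asserts that ``$Z_1$ and $Z_2$ are also independent,'' but $\mathrm{Cov}(k+x,\,k-x)=\sigma_2^2-\sigma_1^2$, so independence holds only in the symmetric-variance regime $\sigma_1=\sigma_2$, exactly as you say, and the stated variance of $G$ otherwise silently discards a cross term proportional to $(\mu_1^2-\mu_2^2)(\sigma_1^2-\sigma_2^2)$. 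One nuance worth adding: the marginal $\chi^2_m$ claims for $Q_1$ and $Q_2$ survive regardless of this correlation, since each is a sum of squares of $m$ independent standard normals taken separately; the correlation contaminates only the law of $G$ and any joint-independence reading of the decomposition.

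One place where you are too charitable to the statement: your own (correct) computation gives $\mathrm{Var}(G)=\tfrac{m(\sigma_1^2+\sigma_2^2)(\mu_1^2+\mu_2^2)}{2}$ under the decoupling assumption, because the $m$ per-term linear pieces are independent across terms and so their variances add linearly, whereas the theorem and the paper's proof both claim $m^2$ in place of $m$. Writing that your variance is ``proportional to $(\sigma_1^2+\sigma_2^2)(\mu_1^2+\mu_2^2)$, matching the form stated'' papers over this: the two scalings genuinely disagree, and the $m^2$ factor would follow only if the per-term Gaussians were perfectly correlated --- for instance by mistaking $\sum_{l}\tilde U_l\sim\mathcal{N}(0,m)$ for $\mathcal{N}(0,m^2)$ --- which contradicts the independence structure used everywhere else in both your argument and the paper's. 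You should state plainly that your derivation yields the factor $m$ and that the stated $m^2$ appears to be an error in the theorem, rather than leaving the discrepancy implicit.
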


\begin{proof}
Let the output matrix be $Y$, then by  transposed convolution, the vectorization form of $Y$, $vec(Y)_{i}$,  is given by,
\resizebox{.85\linewidth}{!}{
  \begin{minipage}{\linewidth}
    \begin{align}
    vec(Y)_{i}=\sum_{j=1}^{N} k_{ij}\cdot x_{j}
\end{align}
  \end{minipage}
}
\vspace{.15cm}

where $vec(Y)_{i}$ is the $i^{th}$ entry of the vectorization form of $Y$, $k_{ij}$ is the $(i, j)^{th}$ entry of the kernel $K$, and $x_j$ is the $j^{th}$ entry of the data matrix $X.$ Hence, $x_{j}{\sim} \mathcal{N}(\mu_1,\sigma_1^2) $ and $k_{ij}\stackrel{i.i.d}{\sim} \mathcal{N}(\mu_2,\sigma_2^2) $. Note that $k_{ij} x_{j}$ is given by:
\resizebox{.85\linewidth}{!}{
  \begin{minipage}{\linewidth}
    \begin{align}
    k_{ij}\cdot x_{j}&=\frac{1}{4}(k_{ij}+ x_{j})^2-\frac{1}{4}(k_{ij}- x_{j})^2,
\end{align}
  \end{minipage}
}
\vspace{.15cm}

where $k_{ij}+ x_{j}\sim\mathcal{N}(\mu_1+\mu_2,\sigma_1^2+\sigma_2^2)$ and $k_{ij}- x_{j}\sim\mathcal{N}(\mu_2-\mu_1,\sigma_1^2+\sigma_2^2)$ since $k_{ij}$ and $x_j$ are independent. Let 
\resizebox{.85\linewidth}{!}{
  \begin{minipage}{\linewidth}
    \begin{align}
    Z_1&=\frac{1}{\sqrt{\sigma_1^2+\sigma_2^2}}(k_{ij}+ x_{j})-\frac{\mu_1+\mu_2}{\sqrt{\sigma_1^2+\sigma_2^2}} ,\mbox{ and}\\
    Z_2&=\frac{1}{\sqrt{\sigma_1^2+\sigma_2^2}}(k_{ij}- x_{j})-\frac{\mu_2-\mu_1}{\sqrt{\sigma_1^2+\sigma_2^2}}.
\end{align}
  \end{minipage}
}
\vspace{.15cm}

It is evident that $Z_1,Z_2\sim \mathcal{N}(0,1)$. Hence, we have the following,

\resizebox{.85\linewidth}{!}{
  \begin{minipage}{\linewidth}
    \begin{align}
    k_{ij} x_{j}&=\frac{1}{4}(\sqrt{\sigma_1^2+\sigma_2^2}Z_1+\mu_1+\mu_2)^2-\frac{1}{4}(\sqrt{\sigma_1^2+\sigma_2^2}Z_2+\mu_2-\mu_1)^2\\
    &=\frac{\sigma_1^2+\sigma_2^2}{4}(Z_1^2-Z_2^2)+P+\mu_1\mu_2,
\end{align}
  \end{minipage}
}
\vspace{.15cm}

where $Z_1^2,Z_2^2\sim\chi_1^2$, and $P$ is given by

\resizebox{.85\linewidth}{!}{
  \begin{minipage}{\linewidth}
    \begin{align}
    P&=\frac{\sqrt{\sigma_1^2+\sigma_2^2}(\mu_1+\mu_2)}{2}Z_1-\frac{\sqrt{\sigma_1^2+\sigma_2^2}(\mu_2-\mu_1)}{2}Z_2\\
    P&\sim\mathcal{N}(0,\frac{(\sigma_1^2+\sigma_2^2)(\mu_1^2+\mu_2^2)}{2}),
\end{align}
  \end{minipage}
}
\vspace{.15cm}

since $Z_1$ and $Z_2$ are also independent.
Note that the number of addends in (4) is the number of nonzero entries in the 
$i^{th}$ row of the transposed convolution matrix. Suppose there are $m$ nonzero entries in the $i^{th}$ row of the transposed convolution matrix. Therefore, it follows that 

\resizebox{.85\linewidth}{!}{
  \begin{minipage}{\linewidth}
    \begin{align}
    vec(Y)_{i}&=\sum k_{ij}\cdot x_{j}\\
    &=\frac{\sigma_1^2+\sigma_2^2}{4}(Q_1-Q_2)+R+m\mu_1\mu_2,
\end{align}
  \end{minipage}
}
\vspace{.15cm}

where \begin{equation}
    Q_1,Q_2\sim \chi_m^2, R\sim \mathcal{N}(0,\frac{m^2(\sigma_1^2+\sigma_2^2)(\mu_1^2+\mu_2^2)}{2})
\end{equation}
\end{proof}

Considering the actual training, we specifically let the mean of the kernel be approximately the same as the mean of the training data. Hence, we provide two useful results below about the common setting and the setting we used.
\begin{lemma}
Using the same setting as Theorem 1:
\begin{enumerate}
    \item if $x_{j}\stackrel{i.i.d}{\sim} \mathcal{N}(0,1) $ and  $k_{ij}\stackrel{i.i.d}{\sim} \mathcal{N}(0,1) $, then $vec(Y)_{i}$ is the difference of two chi-squared random variables with scaling.
    
    \resizebox{1.1\linewidth}{!}{
  \begin{minipage}{\linewidth}
    \begin{align*}
    vec(Y)_{j}
        &=\frac{1}{2}(Q_1-Q_2)
\end{align*}
  \end{minipage}
}
\vspace{.15cm}
where $Q_1,Q_2\sim \chi_m^2$.
    
    \item  if $x_{j}\stackrel{i.i.d}{\sim} \mathcal{N}(0,1) $ and  $K_{ij}\stackrel{i.i.d}{\sim} \mathcal{N}(\mu,1) $, where $\mu$ is the mean of the training data, then 
    
    \resizebox{1.1\linewidth}{!}{
  \begin{minipage}{\linewidth}
    \begin{align*}
    vec(Y)_{i}
        &=\frac{1}{2}(Q_1-Q_2)+G
\end{align*}
  \end{minipage}
}
\vspace{.15cm}

where $Q_1,Q_2\sim \chi_m^2$ and $G\sim\mathcal{N}(0,m^2\mu^2)$

\end{enumerate}
\end{lemma}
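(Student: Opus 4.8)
The plan is to derive both statements as direct specializations of Theorem~\ref{thm1}, since the lemma simply fixes particular values of the four parameters $\mu_1,\mu_2,\sigma_1^2,\sigma_2^2$ appearing in the general decomposition. I would start from the established identity
\[
vec(Y)_{i}=\frac{\sigma_1^2+\sigma_2^2}{4}(Q_1-Q_2)+G+m\mu_1\mu_2,
\]
with $Q_1,Q_2\sim\chi_m^2$ and $G\sim\mathcal{N}\bigl(0,\tfrac{m^2(\sigma_1^2+\sigma_2^2)(\mu_1^2+\mu_2^2)}{2}\bigr)$, and then substitute the relevant constants in each case.

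For the first claim I set $\mu_1=\mu_2=0$ and $\sigma_1^2=\sigma_2^2=1$. The scaling coefficient reduces to $\tfrac{1+1}{4}=\tfrac12$, the product term $m\mu_1\mu_2$ vanishes, and the variance of $G$ becomes $\tfrac{m^2\cdot 2\cdot(0+0)}{2}=0$, so $G$ collapses to the constant $0$. What remains is exactly $vec(Y)_{i}=\tfrac12(Q_1-Q_2)$, the difference of two scaled chi-squared variables.

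For the second claim I set $\mu_1=0$, $\mu_2=\mu$, and $\sigma_1^2=\sigma_2^2=1$. Again the coefficient is $\tfrac12$ and $m\mu_1\mu_2=0$ because $\mu_1=0$; but now the variance of $G$ is $\tfrac{m^2\cdot 2\cdot(0+\mu^2)}{2}=m^2\mu^2$, yielding $vec(Y)_{i}=\tfrac12(Q_1-Q_2)+G$ with $G\sim\mathcal{N}(0,m^2\mu^2)$, as stated.

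Since everything follows by substitution, I do not expect a genuine obstacle here; the lemma is a corollary rather than an independent result. The only point that warrants a word of care is the degenerate Gaussian arising in the first case, where a variance of zero must be read as a point mass at $0$ so that the $G$ term truly drops out rather than being misconstrued as residual noise. I would note this interpretation explicitly to keep the passage from the general theorem to the centered special case fully rigorous.
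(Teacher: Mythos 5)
Your proposal is correct and matches the paper's treatment: the lemma is obtained exactly as you describe, by substituting $\mu_1=\mu_2=0$, $\sigma_1^2=\sigma_2^2=1$ (case 1) and $\mu_1=0$, $\mu_2=\mu$, $\sigma_1^2=\sigma_2^2=1$ (case 2) into the decomposition of Theorem~1, the paper offering no separate proof beyond this specialization. Your explicit remark that the degenerate Gaussian with variance $\tfrac{m^2\cdot 2\cdot(0+0)}{2}=0$ must be read as a point mass at $0$ is a sensible clarification, though the paper leaves it implicit.
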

From the above explanation, we can see that transConv1d has the property to recover the distribution for both spatial and temporal information. We will discuss and validate this claim in our experimental session. 


\begin{table*}[!hbt]
\caption{We show the MSE and MAE scores of predictive models. Our models are marked by \textbf{*}. The lower the score value, the better the data quality. We choose the privacy budget $\epsilon$ as 12 for the DP-based model to compare with other non-DP contained algorithms. The Average row stands for the average loss under different prediction algorithms. The Count rows represent the total number of best results for each column.}
\begin{adjustbox}{width=1.0\textwidth, center}
    \centering
    \begin{tabular}{c|c|| c c || c c | c c | c c }  
    \toprule
    \multicolumn{2}{c||}{Methods} &  \multicolumn{2}{c||}{Real}  &  \multicolumn{2}{c|}{WGAN} & \multicolumn{2}{c}{ST-DPGAN$^*$}  & \multicolumn{2}{|c}{ST-DPGAN-Attn$^*$} \\
    \midrule 
    \multicolumn{2}{c||}{Metrics} & MSE & MAE & MSE & MAE & MSE & MAE  & MSE & MAE \\
    \midrule\midrule 
    \multirow{11}{*}{ \rotatebox[origin=c]{90}{Parking} } 
    & LR                & 0.0004 & 0.0158    & \textbf{0.0068} & \textbf{0.0608}       & 0.0074 & 0.0678     & \textbf{0.0068} & 0.0643 \\
    & Linear SVR        & 0.0003 & 0.0141    & 0.0069 & 0.0605   & 0.0066 & 0.0594     & \textbf{0.0062} & \textbf{0.0576} \\
    & MLP               & 0.0005 & 0.0188    & 0.0091 & 0.0743   & 0.0077 & 0.0677     & \textbf{0.0068} & \textbf{0.0645} \\
    & SGD               & 0.0004 & 0.0158    & \textbf{0.0068}   & \textbf{0.0608}     & 0.0076 & 0.0686 & 0.0069          & 0.0646 \\
    & Decision Tree     & 0.0046 & 0.0479    & 0.0218 & 0.1127   & 0.0150 & 0.0877     & \textbf{0.0136} & \textbf{0.0828} \\
    & Random Forest     & 0.0008 & 0.0218    & 0.0132 & 0.0887   & 0.0077 & 0.0667     & \textbf{0.0069} & \textbf{0.0613} \\
    & Gradient Boosting & 0.0004 & 0.0166    & 0.0117 & 0.0838   & 0.0074 & 0.0676     & \textbf{0.0071} & \textbf{0.0630} \\
    & Bagging           & 0.0012 & 0.0255    & 0.0148 & 0.0927   & 0.0084 & 0.0691     & \textbf{0.0076} & \textbf{0.0638} \\
    & Ada Boosting      & 0.0027 & 0.0451    & 0.0134 & 0.0919   & 0.0100 & 0.0804     & \textbf{0.0096} & \textbf{0.0774} \\
    & LGBM              & 0.0003 & 0.0157    & 0.0103 & 0.0791   & 0.0074 & 0.0673     & \textbf{0.0065} & \textbf{0.0603} \\
    & LSTM              & 0.0118 & 0.0920    & 0.0162 & 0.1035   & \textbf{0.0087} & 0.0714    & \textbf{0.0087} & \textbf{0.0689} \\
    
    \midrule
    \multicolumn{2}{c||}{Average} & 0.0021 & 0.0299 & 0.0119 & 0.0826 & 0.0085  & 0.0703 & 0.0078  & 0.0662  \\
    
    \midrule\midrule
    \multirow{11}{*}{ \rotatebox[origin=c]{90}{METR-LA} } 
    & LR                & 91.8794 & 6.7027      & \textbf{95.2423}  & 6.8065     & 97.1440  & 6.8816          & 97.0645            & \textbf{6.7750}  \\
    & Linear SVR        & 89.4740 & 6.3760      & \textbf{96.3206}  & 6.7831     & 97.7706  & \textbf{6.6635} & 97.2597            & 6.6976  \\
    & MLP               & 103.0017 & 6.9999     & 99.0773           & 7.4096     & 110.6378 & 7.2572          & \textbf{97.6307}   & \textbf{6.8797}  \\
    & SGD               & 91.8495 & 6.6269      & \textbf{95.7337}  & 6.8376     & 96.8873  & 6.8355          & 97.0175            & \textbf{6.7621}  \\
    & Decision Tree     & 225.9751 & 10.1180    & 286.1651          & 15.0811    & 191.0109 & 9.7920          & \textbf{142.6260}  & \textbf{8.0813}  \\
    & Random Forest     & 119.4058 & 7.6720     & 294.5921          & 15.4848    & 111.4184 & 7.5598          & \textbf{108.7084}  & \textbf{6.8320}  \\
    & Gradient Boosting & 101.2091 & 7.0300     & 231.6227          & 13.8163    & 105.7373 & 7.1437          & \textbf{102.2363}  & \textbf{6.7753}  \\
    & Bagging           & 129.1552 & 7.9630     & 176.2332          & 11.9194    & 119.4350 & 7.8397          & \textbf{113.0507}  & \textbf{7.0063}  \\
    & Ada Boosting      & 159.0458 & 10.7724    & 244.9048          & 14.1454    & 141.7784 & 10.0757         & \textbf{107.5073}  & \textbf{8.0823} \\
    & LGBM              & 101.7017 & 7.0309     & 241.4298          & 14.0983    & 102.7747 & 7.0127          & \textbf{102.0630}  & \textbf{6.9834}  \\
    & LSTM              & 123.6026 & 8.1991     & 151.5952 & 10.3412    & 126.8095 & 8.2104    & \textbf{122.6244}  & \textbf{8.0745} \\
    
    \midrule  
    \multicolumn{2}{c||}{Average} & 121.4818 & 7.7719   & 182.9924 & 11.1567   & 118.3094 & 7.7520   & 107.9808 & 7.1772 \\
    \midrule\midrule
    
    \multirow{11}{*}{ \rotatebox[origin=c]{90}{Windmill} } 
    & LR                & 0.0496 & 0.1739     & 0.0607 & \textbf{0.1885}     & 0.0609 & 0.2130     & \textbf{0.0581} & 0.2022 \\
    & Linear SVR        & 0.0615 & 0.1620     & 0.0603 & 0.1968     & \textbf{0.0570} & 0.1778     & 0.0771 & \textbf{0.1751} \\
    & MLP               & 0.0514 & 0.1752     & 0.0920 & 0.2633     & 0.0609 & 0.2132     & \textbf{0.0578} & \textbf{0.2011} \\
    & SGD               & 0.0498 & 0.1737     & 0.0607 & 0.1884     & 0.0613 & 0.2140     & \textbf{0.0582} & \textbf{0.2028} \\
    & Decision Tree     & 0.1085 & 0.2356     & 0.0828 & 0.2493     & 0.1393 & 0.2817     & \textbf{0.1353} & \textbf{0.2702} \\
    & Random Forest     & 0.0530 & 0.1805     & 0.0814 & 0.2514     & \textbf{0.0629} & 0.2178     & 0.0637 & \textbf{0.2086} \\
    & Gradient Boosting & 0.0509 & 0.1746     & 0.0734 & 0.2404     & 0.0623 & 0.2173     & \textbf{0.0590} & \textbf{0.2063} \\
    & Bagging           & 0.0580 & 0.1863     & 0.0835 & 0.2552     & 0.0792 & 0.2339     & \textbf{0.0699} & \textbf{0.2137} \\
    & Ada Boosting      & 0.0576 & 0.2085     & 0.0668 & 0.2235     & 0.0743 & 0.2455     & \textbf{0.0604} & \textbf{0.2075} \\
    & LGBM              & 0.0511 & 0.1750     & 0.0722 & 0.2395     & 0.0621 & 0.2168     & \textbf{0.0592} & \textbf{0.2069} \\
    & LSTM              & 0.0568 & 0.1864     & \textbf{0.0603} & \textbf{0.1834}     & 0.0897 & 0.2560     & 0.0991 & 0.2703 \\
     
    \midrule
    \multicolumn{2}{c||}{Average} & 0.0589 & 0.1847 & 0.0722  & 0.2254  & 0.0736  & 0.2261 & 0.0725  & 0.2150 \\
    \midrule\midrule
    \multicolumn{2}{c||}{Count} & - & - & 6 & 4 & 3 & 2 & 26 & 28 \\
    \bottomrule
    
    \end{tabular}
\end{adjustbox}
\label{tab:exp-main}
\end{table*}

\subsection{Discriminator}




The conventional pipeline for discriminating spatiotemporal data, known as the spatiotemporal Graph Convolutional Network (STGCN) \cite{yu2017spatio}, typically employs three convolutional blocks to predict the temporal evolution of spatiotemporal data. However, in our research, we have observed that the original STGCN structure is not ideally suited as a discriminator in a GAN framework. This is primarily due to its heavy three-layer spatiotemporal structure, which can lead to computational issues such as gradient explosion and vanishing. These challenges necessitate reconsidering the discriminator design in the context of GANs for spatiotemporal data.

To address this problem, we propose the integration of two self-attention blocks within the discriminator as an enhancement to the "basic" model. Firstly, while convolutional layers are commonly used in modern GAN architectures for feature extraction, they have limitations in capturing global dependencies between nodes, which are crucial in spatiotemporal data. We believe that each node is influenced by other nodes, both spatially and temporally. Therefore, we introduce attention structures to better capture and align the spatiotemporal information. The spatial self-attention block establishes relationships between nodes by attending to every other node simultaneously. Similarly, the temporal self-attention block links each node by attending to itself across all timestamps.

The feature extraction process is formulated as follows: The input data is transformed through the normalized Laplacian matrix, as shown in Equation (1), where $X_{l} \in \mathbb{R}^{N \times T}$. Equation (2) describes the process of spatial self-attention. The input $X \in \mathbb{R}^{T \times N}$ is first projected using $W_{s}^{q}$ and $W_{s}^{k} \in \mathbb{R}^{T \times M}$ to obtain query and key matrices, where $M$ is the hidden dimension. The value matrix $V_{s} \in \mathbb{R}^{N \times T}$ is obtained through 1D convolution with a kernel size 1. The raw attention scores are then passed through a softmax function and multiplied with $V_{s}$ to produce the new representation $X_{s} \in \mathbb{R}^{N \times T}$. The process is similar for temporal self-attention, where $W_{t}^{q}$ and $W_{t}^{k} \in \mathbb{R}^{T \times H}$ are the projection matrices for query and key. The value matrix $V_{t} \in \mathbb{R}^{T \times N}$ is multiplied with the attention scores to obtain the new representation $X_{t} \in \mathbb{R}^{T \times N}$.

We consider graph embedding as an initial means of obtaining features. Following a similar approach to Zhang et al. \cite{zhang2019self}, the output from the two self-attention blocks is multiplied by scalars initialized with a value of 0 and added back to the basic features. This ensures the model begins with fundamental features derived from the graph properties and gradually learns other underlying features during training. The final representation is expressed as $Z = X_{l} + \alpha X_{s} + \beta X_{t}^\intercal$. To avoid confusion, we denote models with the additional self-attention blocks as ST-DPGAN-Attn and empirically validate the performance improvements brought by attention in our experiments.

\begin{equation}
    X_{l} = LX^\intercal
    \label{eq:laplacian}
\end{equation}

\begin{equation}
    \begin{aligned}\label{eq:temporal}
        Q_{s} &= X^\intercal W_{s}^{q},\enspace K_{s} = X^\intercal W_{s}^{k},\enspace V_{s} = ( F_{s}\ast G_{s})(X^\intercal )\\
        X_{s} &=  \sigma(Q_{s}K_{s}^\intercal)V_{s}
    \end{aligned}
\end{equation}

\begin{equation}
    \begin{aligned}\label{eq:spatial}
        Q_{t} &= XW_{t}^{q},\enspace K_{t} = XW_{t}^{k},\enspace V_{t} = (F_{t}\ast G_{t})(X)\\
        X_{t} &=  \sigma(Q_{t}K_{t}^\intercal)V_{t}^\intercal
    \end{aligned}
\end{equation}

Besides that, for the optimizer in the presence of differential privacy, DPSGD is used in the discriminator to add random noise in the gradients of the trainable parameters through back-propagation \cite{abadi2016deep}.

\subsection{Model Summary}
We now summarise the main characteristics of our model as follows:
\begin{itemize}
    \item ST-DPGAN is an end-to-end GAN-based framework to generate spatiotemporal data with quantitative privacy-preserving guaranteed by the differential privacy mechanism. The generated data could be used not only for regression tasks but also for other machine learning or deep learning applications.
    \item The TransConv1D focuses on transposing the spatiotemporal relation from one-dimensional Gaussian noise, which controls the initial distribution of the simulation dataset.
    \item The spatial and temporal blocks help extract underlying spatial and temporal relations among nodes, which further boosts the performance of the model in downstream tasks.
\end{itemize}

\section{Experiments}
In this section, we aim to present the experiments conducted on our ST-DPGAN model. We provide a concise overview of the experimental setup, including details about the datasets employed and the chosen evaluation metrics. Furthermore, we evaluate our methods in relation to data quality and privacy budgets. Subsequently, we assess the variance of our approach within this section. The results obtained from our experiments not only surpass those of alternative methods but also serve as validation for the efficacy of our proposed techniques.

\subsection{Dataset}
We conduct extensive experiments on three publicly available large-scale real-world datasets. \textbf{Melbourne parking dataset} \cite{ijcai2020-463,shao2021fadacs}, collected by sensors among parking bays; \textbf{METR-LA traffic dataset} \cite{10.1145/2611567}, collected by Los Angeles County and \textbf{Windmill energy dataset} \cite{Fey/Lenssen/2019}, which is provided by Pytorch Geometric. Each dataset contains large-scale spatial-temporal observations for a long period of time and has been widely used in spatiotemporal data mining applications \cite{Fey/Lenssen/2019,10.1145/2611567,li2018dcrnn_traffic,ijcai2020-463,shao2021fadacs,DBLP:conf/ijcai/WuPLJZ19,Zhang_Chang_Meng_Xiang_Pan_2020}, with details below.

\textbf{Melbourne parking dataset} contains sensor data collected through 31 parking areas in Melbourne in 2017. We selected 10-month data ranging from 2017-01-01 to 2017-10-29 for the experiment. 

\textbf{METR-LA traffic dataset} consists of time-series data for traffic speed and volume measurements collected at different locations in Los Angeles. The dataset spans a period of several months and provides information at 5-minute intervals.

\textbf{Windmill energy dataset} contains the hourly energy output of windmills in a European country for more than two years. It contains 319 windmills in total and represents their relationship strength by the spatiotemporal graph.

\begin{table*}[!hbt]
\caption{ Average results on three datasets using different privacy budgets $ \epsilon $. We use bold numbers to indicate the best results for each row. Our proposed methods are marked by \textbf{*}. We use LSTM as the prediction model in this set of experiments. The other hyper-parameter setting follows the same mode which is discussed in Section \uppercase\expandafter{\romannumeral5}.C. The settings of DPGAN are followed to the \cite{xie2018differentially}. The Count rows stand for the total number of best results in each column. }
\begin{adjustbox}{width=0.85\textwidth, center}
    \centering
    \begin{tabular}{c|c| | c c | c c | c c  }  
    \toprule
    \multicolumn{2}{c||}{Methods} &  \multicolumn{2}{c|}{DPGAN} & \multicolumn{2}{c|}{ST-DPGAN$^*$} & \multicolumn{2}{c}{ST-DPGAN-Attn$^*$} \\
    \midrule
    \multicolumn{2}{c||}{Metrics} & MSE & MAE & MSE & MAE & MSE & MAE   \\
    \midrule
    \multirow{5}{*}{ \rotatebox[origin=c]{90}{Parking} } 
        &  $\epsilon$=1    & 0.0504 & 0.1894 & 0.0578 & 0.2014 & \textbf{0.0349} & \textbf{0.1488} \\
        &  $\epsilon$=4    & 0.0296 & 0.1403 & 0.0481 & 0.1828 & \textbf{0.0315} & \textbf{0.1385} \\
        &  $\epsilon$=8    & 0.0247 & 0.1176 & 0.0157 & 0.0994 & \textbf{0.0134} & \textbf{0.0885} \\
        &  $\epsilon$=10   & 0.0166 & 0.0996 & \textbf{0.0118} & 0.0852 & 0.0129 & \textbf{0.0824} \\
        &  $\epsilon$=12   & 0.0136 & 0.0893 & \textbf{0.0087} & 0.0714 & \textbf{0.0087} & \textbf{0.0689} \\
        \midrule
    \multirow{5}{*}{ \rotatebox[origin=c]{90}{METR-LA} } 
        &  $\epsilon$=1    & 391.9801 & 16.8606 & 128.0023 & 8.2780 & \textbf{124.9095} & \textbf{8.1993} \\
        &  $\epsilon$=4    & 381.3257 & 16.6288 & 126.3622 & 8.1894 & \textbf{122.9242} & \textbf{8.0959} \\
        &  $\epsilon$=8    & 348.2818 & 15.7742 & 128.2958 & 8.2870 & \textbf{122.1717} & \textbf{8.0710} \\
        &  $\epsilon$=10   & 344.8490 & 15.7249 & 125.7851 & \textbf{8.1732} & \textbf{124.9988} & 8.1977 \\
        &  $\epsilon$=12   & 324.3259 & 15.2576 & 126.8095 & 8.2104 & \textbf{122.6244} & \textbf{8.0745} \\
        \midrule
    \multirow{5}{*}{ \rotatebox[origin=c]{90}{Windmill} } 
        &  $\epsilon$=1    & 0.1195 & 0.3167 & \textbf{0.0967} & \textbf{0.2688} & 0.1112 & 0.2864 \\
        &  $\epsilon$=4    & 0.1195 & 0.3167 & \textbf{0.0945} & \textbf{0.2616} & 0.1085 & 0.2828 \\
        &  $\epsilon$=8    & 0.1247 & 0.3233 & \textbf{0.0940} & \textbf{0.2618} & 0.1048 & 0.2780 \\
        &  $\epsilon$=10   & 0.1234 & 0.3217 & \textbf{0.0901} & \textbf{0.2561} & 0.1042 & 0.2773 \\
        &  $\epsilon$=12   & 0.1186 & 0.3156 & \textbf{0.0897} & \textbf{0.2560} & 0.0991 & 0.2703 \\
    \midrule
    \multicolumn{2}{c||}{Count} & 0 & 0 & 7 & 6 & 9 & 9 \\
    \bottomrule
    \end{tabular}
\end{adjustbox}
\label{tab:exp-eps}
\end{table*}

\subsection{Data Preprocessing}

In order to reduce computational complexity, we apply a threshold to filter the edges in each graph based on distance. This threshold helps determine which edges are significant and should be included in the subsequent analysis. The weighted adjacency matrix, denoted as W, can be constructed as follows:
\begin{equation}
w_{i j}=\left\{\begin{array}{lll}
\exp \left(-\frac{dis_{i j}^{2}}{\alpha^{2}}\right) &,& i \neq j \textbf{ and } \exp \left(-\frac{dis_{i j}^{2}}{\alpha^{2}}\right) \geq \beta \\
0  &,& \textbf{otherwise}
\end{array}\right.
\end{equation}
where $dis_{i j}$ indicates the distance between node i and node j, $\alpha$ and $\beta$ are two thresholds to control the sparsity of the weight matrix. The setting of the thresholds should be adjusted based on the datasets considering the computational cost.

\subsection{Experimental Settings}
Following the study conducted by Jordon et al. \cite{jordon2018pate}, we compare the generated data from the proposed model with real data and generated data from other methods. For real data, we employed a set of predictive models using real training and test data. For the generated data, the set of predictive models is trained on data generated by a generative model and evaluated using real test data. 


In the ablation study, we introduce two proposed methods, namely ST-DPGAN and ST-DPGAN-Attn. In ST-DPGAN, we incorporate the transConv1D operation in the generator to facilitate information alignment while utilizing two spatiotemporal blocks from the discriminator side. In the case of ST-DPGAN-Attn, we replace the convolution layer within the spatiotemporal block with an attention block.

\textbf{Baselines}: The generation of spatiotemporal data using GANs with differential privacy is a relatively new and emerging area, as highlighted by Gao et al. \cite{gao2020generative}. Consequently, there is a limited number of established baselines that are specifically tailored to our research focus, particularly in the context of spatiotemporal data generation rather than image data. In this paper, we explore the application of spatiotemporal data generation tasks using alternative algorithms.


We use DPGAN \cite{xie2018differentially} as an important baseline for data generation under a privacy setting. We use WGAN\cite{arjovsky2017wasserstein} as a baseline for data generation without differential privacy and also an upper bound for DPGAN. Compared to our proposed model, the selected baseline models do not contain the transConv1D module, graph embedding, and two-layer spatiotemporal attention blocks.



\textbf{Evaluation Metric}: We make use of the following predictive models in our experiments: Linear Regression\cite{mitchell2007machine}, Linear Support Vector Regressor\cite{mitchell2007machine}, Multiple Layer Perceptron\cite{rosenblatt1958perceptron}, Stochastic Gradient Descent\cite{mitchell2007machine}, Decision Tree\cite{mitchell2007machine}, Random Forest\cite{mitchell2007machine}, Gradient Boosting, Bagging, Ada Boosting \cite{scikit-learn}, Light Gradient Boosting Machine \cite{10.5555/3294996.3295074} and Long Short-term Memory~\cite{10.1162/neco.1997.9.8.1735}. The graph-based neural network is not chosen as a regressor here because we would like to remove the potential bias in comparison since our model utilizes the graph embedding and convolution structure. We select $70\%$, $20\%$, and $10\%$ data for training, validation, and testing, respectively.

We evaluate their performance using Mean Square Error (MSE) and Mean Absolute Error (MAE) because the downstream tasks are all regressions. The regression task can be defined as a typical time-series prediction problem, where we use a sliding window of size 6 across the data to make 3-step predictions. In the privacy aspect, we also use $\epsilon$ to measure the upper bounds of the privacy loss. A smaller privacy budget usually exhibits a better privacy guarantee over the data, while a higher value potentially brings more risks in privacy leakage.

\textbf{Setup}: All the models were compiled and tested using a single NVIDIA GeForce RTX 3090 24 GB GPU. To validate the performance of the proposed model with different privacy loss, we allocate five different levels of privacy budgets ($\epsilon$) when training on three datasets. In our experiment, we set the sampling probability $q$ = 0.01, noise level $\sigma$ = 2, and relaxation $\delta = 10^{-7}$.  We set the default epoch number as 400, and the training process stops if the epoch number exceeds 400. We also stop the training process for each DP-based model when the privacy budget is exhausted. Generators of all models are optimized using RMSProp optimizer. Discriminators of all DP-related models are optimized through DP-SGD optimizer \cite{abadi2016deep}. In comparison, we experiment with different optimizers to train the discriminators of all non-DP-related models and choose the best one. The batch size in all experiments is set to 10, and the global seed is set for experiment reproduction.

\subsection{Results and Analysis}

\begin{table*}[!hbt]
\caption{Ablation study results on three datasets. Our proposed methods are marked by \textbf{*}. The remaining columns show the MSE and MAE scores of predictive models under \textit{setting B}. Our models are marked by \textbf{*}. The lower the score value, the better the data quality. The best-performing entry under \textit{setting B} is in bold. We choose the privacy budget $\epsilon$ as 12 for the DP-based model to compare with other non-DP contained algorithms. The Average row stands for the average loss under different prediction algorithms. The Count rows represent the total number of best results in each column. }
\begin{adjustbox}{width=1.05\textwidth, center}
    \centering
    \begin{tabular}{c|c|| c c | c c | c c | c c | c c}  
    \toprule
    \multicolumn{2}{c||}{Methods} & \multicolumn{2}{c|}{ST-DPGAN-TST}  & \multicolumn{2}{c|}{ST-DPGAN-TSTS} & \multicolumn{2}{c|}{ST-DPGAN-GE}  & \multicolumn{2}{c|}{ST-DPGAN-TC} & \multicolumn{2}{c}{ST-DPGAN-Attn$^*$} \\
    \midrule
    \multicolumn{2}{c||}{Metrics} & MSE & MAE & MSE & MAE & MSE & MAE & MSE & MAE & MSE & MAE \\
    
    \midrule\midrule
    \multirow{11}{*}{ \rotatebox[origin=c]{90}{Parking} } 
    & LR                       & 0.0070 & 0.0634     & 0.0071 & 0.0662     & \textbf{0.0066} & \textbf{0.0595}     & 0.0381 & 0.1535 & 0.0068 & 0.0643 \\
    & Linear Regression        & 0.0070 & 0.0607     & 0.0068 & 0.0603     & 0.0066 & 0.0589     & 0.0575 & 0.1934 & \textbf{0.0062} & \textbf{0.0576} \\
    & MLP                      & 0.0079 & 0.0684     & 0.0083 & 0.0713     & 0.0105 & 0.0760     & 0.0369 & 0.1507 & \textbf{0.0068} & \textbf{0.0645} \\
    & SGD                      & 0.0071 & 0.0638     & 0.0072 & 0.0664     & \textbf{0.0066} & \textbf{0.0595}     & 0.0391 & 0.1551 & 0.0069 & 0.0646 \\
    & Decision Tree            & 0.0490 & 0.1661     & 0.0267 & 0.1170     & 0.0203 & 0.0961     & 0.0696 & 0.1997 & \textbf{0.0136} & \textbf{0.0828} \\
    & Random Forest            & 0.0153 & 0.0995     & 0.0129 & 0.0924     & 0.0099 & 0.0791     & 0.0530 & 0.1804 & \textbf{0.0069} & \textbf{0.0613} \\
    & Gradient Boosting        & 0.0105 & 0.0790     & 0.0118 & 0.0863     & 0.0081 & 0.0698     & 0.0382 & 0.1534 & \textbf{0.0071} & \textbf{0.0630} \\
    & Bagging                  & 0.0125 & 0.0840     & 0.0154 & 0.0957     & 0.0109 & 0.0771     & 0.0267 & 0.1268 & \textbf{0.0076} & \textbf{0.0638} \\
    & Ada Boosting             & 0.0152 & 0.1003     & 0.0159 & 0.1043     & 0.0100 & 0.0807     & 0.0395 & 0.1584 & \textbf{0.0096} & \textbf{0.0774} \\
    & LGBM                     & 0.0104 & 0.0787     & 0.0100 & 0.0797     & 0.0078 & 0.0683     & 0.0410 & 0.1589 & \textbf{0.0065} & \textbf{0.0603} \\
    & LSTM                     & 0.0148 & 0.0982     & 0.0158 & 0.1015     & 0.0155 & 0.1006     & 0.0632 & 0.1953 & \textbf{0.0087} & \textbf{0.0689} \\
    
    \midrule
    \multicolumn{2}{c||}{Average}  & 0.0142 & 0.0875    & 0.0125 & 0.0855    & 0.0102 & 0.0751    & 0.0457 & 0.1660  & 0.0079 & 0.0662   \\
    \midrule\midrule
    \multirow{11}{*}{ \rotatebox[origin=c]{90}{METR-LA } } 
    
    & LR                       & 98.4122 & 7.0480       & 98.4824 & 7.0603       & 97.1667 & 6.8635      & 98.4795 & 7.0600     & \textbf{97.0645}  & \textbf{6.7750}  \\
    & Linear Regression        & 98.7342 & 7.0686       & 98.4922 & 7.0887       & 97.7149 & \textbf{6.6708}      & 98.5268 & 7.0911     & \textbf{97.2597}  & 6.6976  \\
    & MLP                      & 126.1910 & 9.4255      & 120.8335 & 9.0971      & 100.3331 & 7.1784     & 120.4894 & 9.0752    & \textbf{97.6307}  & \textbf{6.8797}  \\
    & SGD                      & 106.1432 & 7.9692      & 109.7475 & 8.2786      & \textbf{96.9715} & 6.8293      & 109.8588 & 8.2877    & 97.0175  & \textbf{6.7621}  \\
    & Decision Tree            & 524.1221 & 21.1745     & 648.9779 & 23.6976     & 189.5849 & 9.7803     & 442.8109 & 19.3704   & \textbf{142.6260}  & \textbf{8.0813}  \\
    & Random Forest            & 544.3726 & 21.6705     & 555.5890 & 21.9025     & 135.0329 & 8.2709     & 554.9820 & 21.8896   & \textbf{108.7084}  & \textbf{6.8320}  \\
    & Gradient Boosting        & 456.9711 & 19.7746     & 480.9894 & 20.2982     & 104.7566 & 7.0779     & 473.9832 & 20.1490   & \textbf{102.2363}  & \textbf{6.7753}  \\
    & Bagging                  & 479.9638 & 20.2789     & 460.4139 & 19.8466     & 116.7623 & 7.6960     & 477.7846 & 20.1838   & \textbf{113.0507}  & \textbf{7.0063}  \\
    & Ada Boosting             & 456.1204 & 19.7248     & 475.6228 & 20.1578     & 170.1427 & 11.4454    & 484.3269 & 20.3551   & \textbf{107.5073}  & \textbf{8.0823} \\
    & LGBM                     & 469.6165 & 20.0458     & 487.3716 & 20.4343     & 102.7120 & 7.0131     & 489.6298 & 20.4841   & \textbf{102.0630}  & \textbf{6.9834}  \\
    & LSTM                     & 341.9394 & 17.0463     & 354.2284 & 17.3683     & 183.4939 & 12.0563    & 354.1376 & 17.3660   & \textbf{122.6244}  & \textbf{8.0745} \\
    
    \midrule
    \multicolumn{2}{c||}{Average}  & 336.5988 & 15.5661   & 353.7044 & 15.9300   & 126.7882 & 8.2620   & 336.8190 & 15.5738  & 107.9808 & 7.1772  \\
    \midrule\midrule
    \multirow{11}{*}{ \rotatebox[origin=c]{90}{Windmill } } 
    & LR                       & 0.1098 & 0.3054     & 0.0608 & 0.1993     & 0.1272 & 0.3291     & 0.1176 & 0.3163    & \textbf{0.0581} & \textbf{0.2022} \\
    & Linear Regression        & 0.1031 & 0.2955     & \textbf{0.0608} & 0.1983     & 0.1275 & 0.3295     & 0.0995 & 0.2901    & 0.0771 & \textbf{0.1751} \\
    & MLP                      & 0.1052 & 0.2987     & 0.0615 & 0.2053     & 0.1317 & 0.3350     & 0.1278 & 0.3298    & \textbf{0.0578} & \textbf{0.2011} \\
    & SGD                      & 0.1097 & 0.3052     & 0.0692 & 0.2326     & 0.1277 & 0.3297     & 0.0928 & 0.2793    & \textbf{0.0582} & \textbf{0.2028} \\
    & Decision Tree            & \textbf{0.1130} & 0.2923     & 0.1247 & 0.3254     & 0.2195 & 0.3785     & 0.1460 & 0.3495    & 0.1353 & \textbf{0.2702} \\
    & Random Forest            & 0.1229 & 0.3233     & 0.1254 & 0.3267     & 0.1282 & 0.3304     & 0.1253 & 0.3265    & \textbf{0.0637} & \textbf{0.2086} \\
    & Gradient Boosting        & 0.1138 & 0.3102     & 0.1246 & 0.3257     & 0.1289 & 0.3310     & 0.1478 & 0.3534    & \textbf{0.0590} & \textbf{0.2063} \\
    & Bagging                  & 0.1010 & 0.2863     & 0.1236 & 0.3243     & 0.1397 & 0.3363     & 0.1281 & 0.3290    & \textbf{0.0699} & \textbf{0.2137} \\
    & Ada Boosting             & 0.1223 & 0.3225     & 0.1226 & 0.3230     & 0.1283 & 0.3306     & 0.1221 & 0.3224    & \textbf{0.0604} & \textbf{0.2075} \\
    & LGBM                     & 0.1230 & 0.3235     & 0.1236 & 0.3243     & 0.1280 & 0.3302     & 0.1263 & 0.3279    & \textbf{0.0592} & \textbf{0.2069} \\
    & LSTM                     & 0.0942 & \textbf{0.2026}     & 0.0932 & 0.2120     & 0.0788 & 0.2529     & \textbf{0.0727} & 0.2397    & 0.0991 & 0.2703 \\
    
    \midrule
    \multicolumn{2}{c||}{Average}  & 0.1107 & 0.2968   & 0.0991 & 0.2724   & 0.1332 & 0.3284   & 0.1187 & 0.3149  & 0.0725  & 0.2149  \\
    \midrule\midrule
    \multicolumn{2}{c||}{Count} & 1 & 1 & 1 & 0 & 3 & 3 & 0 & 0 & 27 & 29 \\
    \bottomrule
    
    \end{tabular}
\end{adjustbox}
\label{tab:ablation}
\end{table*}

This section discusses the experimental results obtained under different settings. Table \ref{tab:exp-main} presents the results of our proposed model and baselines on the three mentioned datasets. We trained a Wasserstein GAN (WGAN) to compare its performance with our proposed method in the above tasks. The training and network settings for the WGAN follow the algorithm outlined in Arjovsky et al. \cite{arjovsky2017wasserstein}.

In most cases, the data generated by ST-DPGAN demonstrate lower loss and outperform the WGAN in regression tasks. Moreover, incorporating the attention mechanism in ST-DPGAN-Attn further enhances the data quality by leveraging the inherent spatial and temporal relationships among nodes. These findings validate the efficacy of our methodology, as presented in Section \uppercase\expandafter{\romannumeral4}. B.


Table \ref{tab:exp-eps} presents the high-level results of our proposed model under a privacy-preserving setting. We evaluate the quality of the generated data under different privacy budgets defined by $\epsilon$. In our experiments, we compare our methods with DPGAN, which follows the same settings described in Xie et al. \cite{xie2018differentially}. The results indicate that, with the same privacy budgets (represented by $\epsilon$), ST-DPGAN and ST-DPGAN-Attn are more effective than DPGAN in capturing and recovering spatiotemporal information. Furthermore, this performance superiority is consistently maintained across different privacy budgets, suggesting that our method is stable and robust to varying $\epsilon$ values. Overall, ST-DPGAN and ST-DPGAN-Attn have demonstrated the potential to generate high-quality data with privacy protection, surpassing existing models.

The significant gap observed between the baseline models and our proposed model can be primarily attributed to the inclusion of the transposed 1D convolutional layer and graph embedding, which facilitate the generation of high-quality spatiotemporal data. An in-depth analysis of these components is provided in the ablation study section.

By comparing the real data and the generated data, we can further explore the performance of ST-DPGAN in terms of noise resistance and the potential for improvement. From the table, it can be observed that a smaller privacy budget leads to relatively higher loss, while a larger privacy budget can potentially increase the utility of generated data but comes with a reduced guarantee of privacy. Additionally, we find that the utility of the generated data decreases in a non-linear manner. The privacy loss represented by $\epsilon$ is not directly proportional to the changes in MSE and MAE. Therefore, certain high-cost-effective points can achieve good performance with acceptable privacy preservation.

\subsection{Ablation Study}
We conduct further experiments to study the significance of model components as an ablation study. In detail, we want to find out: (a) how much the transposed 1D convolutional layer in the generator and graph embedding in the discriminator help our task in terms of data quality, and (b) whether increasing the number of spatial and temporal blocks could help produce data of higher quality.

We specifically create 4 variants based on ST-DPGAN: \textit{ST-DPGAN-TC}, where the transposed 1D convolutional layer is removed from generator; and \textit{ST-DPGAN-GE}, where graph embedding is removed from discriminator are used for study (a); \textit{ST-DPGAN-TST}, where an additional temporal block is added to form the sandwich structure from \cite{yu2017spatio}, and \textit{ST-DPGAN-TSTS} where additional temporal and spatial blocks are appended for study (b), which have four temporal and spatial blocks in total.

The ablation study results are shown in table \ref{tab:ablation}. We can observe a serious performance degeneration for \textit{ST-DPGAN-TC} and \textit{ST-DPGAN-GE} after the transposed 1D convolutional layer and graph embedding are removed. It proves the significance of these components when generating high-quality spatial-temporal data. On the other hand, increasing the number of spatial and temporal blocks does not bring any performance gain. We attribute this to the instability and difficulty in training brought by increased spatial and temporal blocks, and it has empirically validated our claim in section \uppercase\expandafter{\romannumeral4}.B.

\section{Conclusion}
In this paper, we introduced ST-DPGAN, a novel generative adversarial model for creating differential privacy-compliant spatiotemporal graph data. Our experiments show that ST-DPGAN surpasses other models in generating high-quality, privacy-preserving spatiotemporal graph data on three real-world datasets. The ST-DPGAN model promises wide-ranging applications, offering comprehensive data for unbiased neural network training and robustness against abnormal data. Its capability to handle diverse data sources allows for its application beyond specific domains like traffic and weather prediction, potentially improving various aspects of daily life. Crucially, ST-DPGAN addresses significant privacy concerns and legal risks in spatiotemporal data analysis, marking a significant step in responsible data handling. ST-DPGAN addresses the privacy issue in spatiotemporal data generation. 

\bibliographystyle{plain}
\balance
\bibliography{mybibliography}

\end{document}